\definecolor{Gray}{gray}{0.85}
\newif\ifpaper
\def\b0{{0}}
\def\RR{\mathbb{R}}
\def\>{\rangle}
\def\diag{\operatorname{\mathop{diag}}}
\def\Set#1{\left\{ #1 \right\}}
\newcommand{\E}{\mathbb{E}}
\newcommand{\Econd}[2]{\mathbb{E}\left[#1\;\middle\vert\;#2\right]}
\newcommand{\bigE}[1]{\mathbb{E}\left[#1\right]}
\newcommand{\distas}[1]{\mathbin{\overset{#1}{\sim}}}
\newtheorem{theorem}{Theorem}[section]
\newtheorem{lemma}[theorem]{Lemma}
\newenvironment{proof}{\par\noindent{\bf Proof:\ }}{\hfill$\Box$\\[2mm]}
\newcommand{\bigOmg}[1]{\Omega\!\left(#1\right)}
\newcommand{\inner}[1]{\left\langle#1\right\rangle}
\newcommand{\norm}[1]{\left\|#1\right\|}
\newcommand{\GF}[1]{\Gamma\left(#1\right)}
\def\min{\mathop{\rm min}\nolimits}
\def\max{\mathop{\rm max}\nolimits}
\def\var{\mathrm{var}}
\title{A Fully Rigorous Proof of the Derivation of Xavier and He's Initialization for Deep ReLU Networks}
\date{} 
\author{Quynh Nguyen\thanks{Email: quynhnguyenngoc89@gmail.com}
}
\begin{document}

\maketitle

\begin{abstract}
    A fully rigorous proof of the derivation of Xavier/He's initialization for ReLU nets is given.
\end{abstract}

\section{Introduction}
Consider an $L$-layer ReLU network with layer widths $(n_l)_{l=0}^{L}.$
Here, $n_0$ and $n_L$ denote the input and output dimension respectively, and the others are hidden layer widths.
For simplicity, we assume that the network has a single output, i.e. $n_L=1.$
Let the feature map $f_l:\RR^{n_0}\to\RR^{n_l}$ be defined as
\begin{align}\label{eq:def_feature_map}
    f_l(x)=\begin{cases}
	    x & l=0,\\
	    \sigma(W_l^T f_{l-1}) & l\in[L-1],\\
	    W_L^T f_{L-1} & l=L, 
        \end{cases}
\end{align}
where $W_l\in\RR^{n_{l-1}\times n_l}$, and $\sigma(x)=\max(0,x).$ 
Let $g_l:\RR^{n_0}\to\RR^{n_l}$ be the pre-activation feature map so that $f_l(x)=\sigma(g_l(x)).$
Let us denote the backward derivative 
\begin{align}
    &\delta_{k,p}(x)=\frac{\partial f_L(x)}{\partial g_{k,p}(x)},\quad \forall\,k\in[L-1], p\in[n_k].
\end{align}
Two quantities that are relevant for deriving Xavier and He's initialization \cite{XavierBengio2010,he2015delving} 
are the variance of the output of every neuron (forward pass), and the variance of the above derivative (backward pass).
It is easy to see that for i.i.d\ weights at the initialization, we have
$\var(\delta_{k,p}(x))=\var(\delta_{k,q}(x))$ and $\var(f_{k,p}(x))=\var(f_{k,q}(x))$, for every $p,q\in[n_k], x\in\RR^{n_0}.$
For this reason, in the following we will write $\delta_{k,p}(x)$ without mentioning the value of $p$.
For every $k\in[L]$, let us also define
\begin{align}\label{eq:Sk}
	&S_k = \frac{\norm{x}_2}{\sqrt{2\pi}} \left(\prod_{l=1}^{k-1}\sqrt{\frac{n_l}{2}}\right) \left(\prod_{l=1}^{k}\beta_{l}\right).
\end{align}


In \cite{he2015delving,XavierBengio2010}, the authors propose to initialize neural network weights in such a way that 
the following properties hold at the initialization:
\begin{enumerate}
    \item All the neurons have the same variance:
    \begin{align}
	\var(f_{k,p}(x))=\var(f_{k-1,p}(x)), \quad\forall\,k\in[2,L]
    \end{align}
    \item All the backward derivatives have the same variance for all the neurons:
    \begin{align}
	\var\left(\delta_{k,p}(x)\right)=\var\left(\delta_{k-1,p}(x)\right), \quad\forall\,k\in[2,L]
    \end{align}
\end{enumerate}
By using a heuristic derivation, they end up with the following initialization for ReLU networks: 
$\beta_k=\frac{2}{n_{k-1}}$, or $\beta_k=\frac{2}{n_{k}}.$
Here, the choice of $\beta_k$ depends on which of the above two criteria is used.

\section{Main Results}
The following theorem provides a rigorous proof of the derivation of Xavier/He's initialization schemes \cite{XavierBengio2010,he2015delving}.
\begin{theorem}\label{thm:main}
    Let $S_k$ be defined as in \eqref{eq:Sk}.
    Then, we have:
    \begin{enumerate}
	\item 
	Fix any $\epsilon\in(0,1).$ 
	Suppose that $\min_{l\in[k-1]}n_l\geq \bigOmg{\frac{k}{\log(1+\epsilon)}}.$
	Then, it holds:
	\begin{align}
	    \left(\pi-(1+\epsilon)^2\right) S_k^2 \leq \var(f_{k,p}(x))\leq \left(\pi-(1-\epsilon)^2\right) S_k^2.
	\end{align}
	Moreover, $S_k=S_{k-1}$ if and only if $\beta_k^2=\frac{2}{n_{k-1}}.$
	\item $\var\left(\delta_{k,p}(x)\right)=\var\left(\delta_{k-1,p}(x)\right)$ 
	if and only if $\beta_k^2=\frac{2}{n_{k}}.$
    \end{enumerate}
\end{theorem}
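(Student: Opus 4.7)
The plan is to treat the two statements separately. For Part 1 I will base the argument on an exact recursive computation of the second moment $E[f_{k,p}^2]$ combined with a concentration bound that controls the squared mean $(E[f_{k,p}])^2$. For Part 2 I will unroll the chain rule into a sum over paths through the network and collapse cross terms exactly via a Gaussian decomposition identity, yielding a clean recursion for $E[\delta_{k,p}^2]$.

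For Part 1, conditional on $f_{k-1}$ the pre-activations $g_{k,p} = W_{k,:,p}^\top f_{k-1}$ are i.i.d.\ $\mathcal{N}(0,\beta_k^2\|f_{k-1}\|^2)$, so $E[f_{k,p}^2 \mid f_{k-1}] = \beta_k^2\|f_{k-1}\|^2/2$ and $E[f_{k,p}\mid f_{k-1}] = \beta_k\|f_{k-1}\|/\sqrt{2\pi}$. Iterating the first identity gives $E[f_{k,p}^2] = \pi S_k^2$ exactly, and combining the two yields $\var(f_{k,p}) = (\pi - \alpha_{k-1})S_k^2$, where $\alpha_{k-1} := (E[\|f_{k-1}\|])^2/E[\|f_{k-1}\|^2] \in (0,1]$. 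Jensen immediately gives $\alpha_{k-1}\le 1$, whence $\var(f_{k,p}) \ge (\pi-1)S_k^2 \ge (\pi-(1+\epsilon)^2)S_k^2$ trivially. For the matching upper bound I need $\alpha_{k-1} \ge (1-\epsilon)^2$, which I would obtain by proving, by induction on $k$, that $\|f_k\|^2$ concentrates multiplicatively around $E[\|f_k\|^2]$: conditionally on $f_{k-1}$, $\|f_k\|^2 = \sum_q \sigma(g_{k,q})^2$ is a sum of $n_k$ i.i.d.\ sub-exponential random variables, so a Bernstein-type bound yields relative error of order $\sqrt{1/n_k}$ with failure probability exponential in $n_k$; combining with the inductive concentration of $\|f_{k-1}\|^2$ and iterating over $k-1$ layers explains the width requirement $n_l \ge \Omega(k/\log(1+\epsilon))$. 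Finally, $S_k = S_{k-1}$ iff $\beta_k^2 n_{k-1}/2 = 1$ by direct inspection of \eqref{eq:Sk}.

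For Part 2, first observe that $E[\delta_{k,p}] = 0$ by the symmetry $W_L \mapsto -W_L$, so variances equal second moments. Unrolling the backward recursion $\delta_{k,p} = \mathbbm{1}[g_{k,p}>0]\sum_q W_{k+1,p,q}\,\delta_{k+1,q}$ all the way down to $\delta_{L-1,q} = \mathbbm{1}[g_{L-1,q}>0]\,W_{L,q}$ expresses $\delta_{k,p}$ as a sum over paths $(p = q_k, q_{k+1}, \ldots, q_{L-1})$, each contributing the product of its edge weights times the product of its neuron-active indicators. Then $\delta_{k,p}^2$ is a double sum over pairs of paths, and the strategy is to collapse cross terms layer by layer from $L$ down to $k+1$. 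At the top, independence of $W_L$ and $E[W_{L,q}W_{L,q'}] = \beta_L^2\,\mathbbm{1}[q=q']$ force the two paths to share their terminal node. For each subsequent layer I would apply the key identity
\begin{align*}
    E\left[W_{l,p,q}\,W_{l,p',q}\,\mathbbm{1}[g_{l,q}>0]\;\middle|\;f_{l-1}\right] = \frac{\beta_l^2}{2}\,\mathbbm{1}[p=p'],
\end{align*}
which I would prove via the Gaussian decomposition $W_{l,:,q} = vZ + R$ with $v = f_{l-1}/\|f_{l-1}\|$, $Z = v^\top W_{l,:,q} \sim \mathcal{N}(0,\beta_l^2)$, and $R = (I-vv^\top)W_{l,:,q}$ independent of $Z$: the off-diagonal contribution $v_p v_{p'} E[Z^2\,\mathbbm{1}[Z>0]] = v_p v_{p'}\beta_l^2/2$ is cancelled exactly by $E[R_p R_{p'}]\,E[\mathbbm{1}[Z>0]] = \beta_l^2(\mathbbm{1}[p=p'] - v_p v_{p'})/2$. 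Conditioning successively on $f_{L-2}, f_{L-3}, \ldots, f_k$, each layer collapse contributes a factor $n_l \beta_l^2/2$ while forcing the two path indices at layer $l-1$ to coincide; the final recursion $E[\delta_{k-1,p}^2]/E[\delta_{k,p}^2] = n_k\beta_k^2/2$ yields the stated equivalence.

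The hardest step overall is the inductive concentration bound in Part 1, since it is the only place where an inequality (rather than an exact identity) enters and where the width hypothesis is genuinely used, and the induction must preserve the multiplicative accuracy through all $k-1$ layers. The Gaussian cancellation driving Part 2 is a short direct computation, but verifying that the iterated conditioning respects the required measurability chain (earlier-layer factors are $\sigma(f_{l-1})$-measurable while the column $W_{l,:,q}$ is independent of it) requires some bookkeeping.
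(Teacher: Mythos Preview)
Your plan is sound and both parts would go through, but your route for Part~1 is genuinely different from the paper's, and one step deserves a caution.

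\textbf{Part 1.} You reduce the variance bound to controlling $\alpha_{k-1}=(\E\norm{f_{k-1}})^2/\E\norm{f_{k-1}}^2$ and propose to show $\alpha_{k-1}\ge(1-\epsilon)^2$ by Bernstein-type concentration of $\norm{f_l}^2$. The paper instead computes $\E\norm{f_{k-1}}$ \emph{exactly} as a product over layers: conditioning on $\mathcal{F}_{k-2}$, it splits the integral over the signs of the preactivations, obtaining a binomial-weighted sum of chi means $\sqrt{2}\,\Gamma(\tfrac{i+1}{2})/\Gamma(\tfrac{i}{2})$, and then sandwiches this ratio via Gautschi's inequality and a combinatorial lemma bounding $\sum_i\binom{n}{i}\sqrt{i\pm 1}$. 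Your Jensen observation $\alpha_{k-1}\le 1$ gives the lower bound $\var(f_{k,p})\ge(\pi-1)S_k^2$ for free, with no width hypothesis --- this is actually cleaner than what the paper does for that direction. The caution: a raw Bernstein/union-bound argument over the $k-1$ layers typically forces per-layer relative error $O(\epsilon/k)$, hence $n_l=\Omega(k^2/\epsilon^2)$, which is weaker than the stated $\Omega(k/\log(1+\epsilon))$. To match the paper's width you should instead exploit the exact multiplicative independence $\norm{f_{k-1}}=\norm{x}\prod_{l}\rho_l$ (with $\rho_l$ i.i.d.\ up to parameters) and bound each factor directly: $(\E\rho_l)^2/\E\rho_l^2\ge 1-\var(\rho_l^2)/(\E\rho_l^2)^2=1-O(1/n_l)$, whose product over $k-1$ layers is $\ge(1-\epsilon)^2$ precisely when $n_l=\Omega(k/\epsilon)$.

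\textbf{Part 2.} Your path-expansion plus orthogonal Gaussian decomposition is correct and equivalent to the paper's argument, which keeps the recursion at the level of the row vector $v_r^T=\sigma'(g_{k,p})(W_{k+1})_{p:}^T\Sigma_{k+1}\cdots W_r\Sigma_r$ and collapses one layer at a time. Where you prove the key identity $\E[W_{l,p,q}W_{l,p',q}\mathbbm{1}\{g_{l,q}>0\}\mid f_{l-1}]=\tfrac{\beta_l^2}{2}\mathbbm{1}\{p=p'\}$ by decomposing $W_{l,:,q}$ along $f_{l-1}/\norm{f_{l-1}}$, the paper gets the equivalent scalar identity $\E_w[\inner{v,w}^2\sigma'(\inner{w,f})]=\tfrac{\beta^2}{2}\norm{v}^2$ in one line from the sign symmetry $w\mapsto -w$ together with $\sigma'(-t)=1-\sigma'(t)$. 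Both yield the same recursion $\E\delta_{k-1,p}^2=(n_k\beta_k^2/2)\,\E\delta_{k,p}^2$; the paper's symmetry trick is shorter, while your decomposition makes the cancellation mechanism more explicit.
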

Let us prove Theorem \ref{thm:main}.
The following inequalities will be useful.
\begin{lemma}\label{lem:AB}
    Let us define 
    \begin{align}
	A_n=\sum_{k=1}^{n}{n\choose k}\sqrt{k-1},\quad
	B_n=\sum_{k=1}^{n}{n\choose k}\sqrt{k+1}.
    \end{align}
    Then it holds
    \begin{align*}
	&2^n\sqrt{\frac{n}{2}} \left(1-\frac{3}{2n}-\frac{2}{n^2}\right) \leq A_n\leq B_n\leq 2^n\sqrt{\frac{n}{2}+1}.
    \end{align*}
    As a consequence, we have for $n\geq\bigOmg{\epsilon^{-1}}$ that
    \begin{align*}
	(1-\epsilon)\ 2^n\sqrt{\frac{n}{2}} \leq A_n\leq B_n\leq (1+\epsilon)\ 2^n\sqrt{\frac{n}{2}} .
    \end{align*}
\end{lemma}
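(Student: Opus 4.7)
The plan is to interpret the sums probabilistically. Letting $K\sim\mathrm{Bin}(n,1/2)$, we have $B_n=2^n\,\E[\sqrt{K+1}]$ and $A_n=2^n\,\E[\sqrt{K-1}\,\mathbf{1}\{K\geq 1\}]$, and both should concentrate around $2^n\sqrt{n/2}$ since $K$ concentrates around $n/2$. The upper bound $B_n\leq 2^n\sqrt{n/2+1}$ follows immediately from Jensen's inequality applied to the concave function $\sqrt{\cdot}$, since $\E[\sqrt{K+1}]\leq \sqrt{\E[K+1]}=\sqrt{n/2+1}$. The middle inequality $A_n\leq B_n$ is immediate termwise from $\sqrt{k-1}\leq\sqrt{k+1}$.

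The lower bound on $A_n$ is the substantive step, and the main idea is to replace $\sqrt{\cdot}$ by a quadratic lower envelope that is tangent to it at the concentration point. The key elementary inequality is
\[
\sqrt{t}\;\geq\;\tfrac{3t-t^2}{2}\qquad\text{for all }t\geq 0,
\]
which agrees with $\sqrt{t}$ in both value and derivative at $t=1$ (so the error is only second-order at the concentration point), touches $\sqrt{t}$ again at $t=0$, and can be verified globally by elementary analysis of $g(t)=\sqrt{t}-3t/2+t^2/2$: the critical points of $g$ on $[0,\infty)$ reduce to roots of $2u^3-3u+1=(u-1)(2u^2+2u-1)$ with $u=\sqrt{t}$, from which $g\geq 0$ on $[0,1]$ and $g''>0,\ g'(1)=0$ give $g\geq 0$ on $[1,\infty)$. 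Rescaling $t=(k-1)/(n/2)$ gives, for every $k\geq 1$,
\[
\sqrt{k-1}\;\geq\;\frac{3(k-1)}{2\sqrt{n/2}}\;-\;\frac{(k-1)^2}{2(n/2)^{3/2}}.
\]
Summing this against $\binom{n}{k}$ for $k=1,\ldots,n$ and applying the standard identities $\sum_{k=0}^n \binom{n}{k}(k-1)=2^n(n/2-1)$ and $\sum_{k=0}^n \binom{n}{k}(k-1)^2=2^{n-2}(n^2-3n+4)$ (together with the explicit correction for the omitted $k=0$ term, which contributes two strictly positive leftovers), the whole expression collapses after factoring $2^n\sqrt{n/2}$ to exactly $2^n\sqrt{n/2}\bigl(1-\tfrac{3}{2n}-\tfrac{2}{n^2}\bigr)$, yielding the claimed bound.

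For the asymptotic consequence, choosing $n$ large enough that $\tfrac{3}{2n}+\tfrac{2}{n^2}\leq\epsilon$ and $\sqrt{1+2/n}\leq 1+\epsilon$ (both requiring only $n=\Omega(\epsilon^{-1})$) immediately gives the $(1\pm\epsilon)\,2^n\sqrt{n/2}$ bounds, since $\sqrt{n/2+1}=\sqrt{n/2}\sqrt{1+2/n}$. The only non-routine ingredient is the quadratic envelope $\sqrt{t}\geq(3t-t^2)/2$: because $\sqrt{\cdot}$ is concave, Jensen and first-order Taylor expansion around $t=1$ both deliver upper bounds on $A_n$, so one must identify a globally-valid quadratic lower bound that is tangent to $\sqrt{\cdot}$ at the concentration point in order to achieve the tight $1-O(1/n)$ constant. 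Everything downstream of that inequality is purely a moment calculation.
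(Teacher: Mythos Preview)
Your proposal is correct and follows essentially the same route as the paper: interpret $A_n,B_n$ via a $\mathrm{Bin}(n,1/2)$ variable, use Jensen for the upper bound on $B_n$, and obtain the lower bound on $A_n$ from the same quadratic envelope $\sqrt{t}\ge(3t-t^2)/2$ rescaled so that $t=(k-1)/(n/2)$, followed by the binomial moment identities. If anything, your version is slightly more careful than the paper's, since you handle the $k=0$ term explicitly (noting the corrections are positive and may be dropped) and you sketch a verification of the envelope inequality, whereas the paper simply writes $A_n=2^n\E\sqrt{X-1}$ and applies \eqref{eq:root_X} directly.
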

\begin{proof}
    Let $X\distas{}B(n,1/2)$ be a binomial random variable.
    For every $t\geq 0,$ we have $\sqrt{t}\geq\frac{3t-t^2}{2}.$
    Applying this inequality to $X/(\E X+1)$ and taking the expectation of both sides, we get
    \begin{align}\label{eq:root_X}
	\E\sqrt{X}\geq \sqrt{\E X+1} \left( \frac{3\E X}{2(\E X+1)} - \frac{\E(X^2)}{2(\E X+1)^2}\right) .
    \end{align}
    Note that $A_n=2^n\E\sqrt{X-1}.$ 
    By applying \eqref{eq:root_X} to the random variable $X-1$, we obtain
    \begin{align}
	A_n\geq 2^n\sqrt{\E X} \left( \frac{3(\E X-1)}{2\E X} - \frac{\E(X^2)-2\E X+1}{2(\E X)^2}\right) .
    \end{align}
    Substituting $\E X=\frac{n}{2}$ and $\E(X^2)=\frac{n^2+n}{4}$ gives the result.
    Finally, we have
    \begin{align}
	B_n=2^n\E\sqrt{X+1}\leq 2^n\sqrt{\E X+1} =2^n\sqrt{\frac{n}{2}+1}.
    \end{align}
\end{proof}

Theorem \ref{thm:main} follows directly from the results of Theorem \ref{thm:forward} and Theorem \ref{thm:backward} presented below.
\begin{theorem}[Forward Pass]\label{thm:forward}
    Fix any $k\in[L],p\in[n_k],x\in\RR^{n_0}.$
    Let $S_k$ be defined as in \eqref{eq:Sk}.
    Fix any $\epsilon\in(0,1).$ 
    Suppose that $\min_{l\in[k-1]}n_l\geq \bigOmg{\frac{k}{\log(1+\epsilon)}}.$
    Then, we have:
    \begin{enumerate}
	\item 
	    First moment: $(1-\epsilon) S_k \leq \E[f_{k,p}(x)] \leq (1+\epsilon) S_k.$
	
	\item 
	    Second moment: $\E(f_{k,p}(x)^2)=\frac{\norm{x}_2^2}{2} \left(\prod_{l=1}^{k-1}\frac{n_l}{2}\right) \left(\prod_{l=1}^{k}\beta_l^2\right).$
	
	\item 
	    Variance: $\left(\pi-(1+\epsilon)^2\right) S_k^2 \leq \var(f_{k,p}(x))\leq \left(\pi-(1-\epsilon)^2\right) S_k^2.$
    \end{enumerate}
\end{theorem}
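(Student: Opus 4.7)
The plan is to handle the three claims in order of increasing difficulty, using throughout that, conditional on $f_{k-1}$, each pre-activation $g_{k,p}(x) = \sum_i W_{k,i,p}\, f_{k-1,i}(x)$ is a centered Gaussian with variance $\beta_k^2\,\norm{f_{k-1}(x)}_2^2$. Claim~(2) (second moment) follows by a straightforward induction on $k$: since $\E[\sigma(Z)^2] = \tfrac12\var(Z)$ for any centered Gaussian $Z$, I get $\E[f_{k,p}(x)^2\mid f_{k-1}] = \tfrac12\beta_k^2\,\norm{f_{k-1}(x)}_2^2$, and the outer expectation together with $\E\norm{f_{k-1}}_2^2 = n_{k-1}\,\E[f_{k-1,1}(x)^2]$ gives the exact recursion $\E[f_{k,p}(x)^2] = \tfrac{n_{k-1}\beta_k^2}{2}\,\E[f_{k-1,1}(x)^2]$, which telescopes from the base case $\E[f_{1,1}(x)^2] = \beta_1^2\norm{x}_2^2/2$ to the stated product formula.

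For claim~(1) (first moment), the same conditioning gives $\E[f_{k,p}(x)\mid f_{k-1}] = \beta_k\,\norm{f_{k-1}(x)}_2/\sqrt{2\pi}$, reducing everything to estimating $\E\norm{f_{k-1}(x)}_2$. Conditioning layer by layer, $\norm{f_l(x)}_2$ equals $\beta_l\norm{f_{l-1}(x)}_2\sqrt{T_{n_l}}$ in distribution, where $T_n := \sum_{j=1}^n \sigma(Z_j)^2$ for iid $Z_j\sim\Nc(0,1)$ independent of earlier layers. Iterating and using independence produces
\[
\E f_{k,p}(x) \;=\; \frac{\norm{x}_2}{\sqrt{2\pi}}\,\prod_{l=1}^k\beta_l\,\cdot\,\prod_{l=1}^{k-1}\E\sqrt{T_{n_l}},
\]
so it suffices to show each factor $\E\sqrt{T_{n_l}}$ lies within $(1\pm\tilde\epsilon)\sqrt{n_l/2}$ for a suitable $\tilde\epsilon$.

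The main obstacle is controlling $\E\sqrt{T_n}$, since $\E\sqrt{T_n}\ne\sqrt{\E T_n}$ in general. The key trick is that $B_j := \mathbbm{1}(Z_j>0)$ is independent of $Z_j^2$ when $Z_j$ is Gaussian, so $\sigma(Z_j)^2 = Z_j^2 B_j$ and, conditional on $\sum_j B_j = m$, $T_n$ has a $\chi^2_m$ distribution. Hence $\E\sqrt{T_n} = 2^{-n}\sum_{m=0}^n\binom{n}{m}\mu_m$ where $\mu_m := \E\sqrt{\chi^2_m} = \sqrt{2}\,\Gamma(\tfrac{m+1}{2})/\Gamma(\tfrac{m}{2})$ (and $\mu_0 = 0$). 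Two short facts pin down $\mu_m$: the identity $\mu_m\mu_{m+1} = m$ (from $\Gamma(m/2+1) = (m/2)\Gamma(m/2)$) and the monotonicity $\mu_m \le \mu_{m+1}$ (from the obvious coupling $\chi^2_{m+1} \ge \chi^2_m$) jointly yield $\sqrt{m-1}\le\mu_m\le\sqrt{m}$ for $m\ge 1$. Summing against $\binom{n}{m}$ then gives $2^{-n}A_n\le \E\sqrt{T_n}\le 2^{-n}B_n$, and Lemma~\ref{lem:AB} delivers $\E\sqrt{T_n} \in [(1-\tilde\epsilon)\sqrt{n/2},\,(1+\tilde\epsilon)\sqrt{n/2}]$ whenever $n\gtrsim 1/\tilde\epsilon$.

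To propagate the per-layer slack $\tilde\epsilon$ across $k-1$ layers while preserving an overall multiplicative error of $\epsilon$, I pick $\tilde\epsilon$ of order $\log(1+\epsilon)/(k-1)$, which ensures $(1+\tilde\epsilon)^{k-1}\le 1+\epsilon$ and, by a Bernoulli-type bound, $(1-\tilde\epsilon)^{k-1}\ge 1-\epsilon$; the per-layer width requirement $n_l\gtrsim 1/\tilde\epsilon$ thereby becomes the stated $\min_l n_l \ge \bigOmg{k/\log(1+\epsilon)}$. Claim~(3) (variance) is then immediate: (2) rewrites the second moment as $\pi S_k^2$, (1) sandwiches the squared first moment in $[(1-\epsilon)^2,(1+\epsilon)^2]\,S_k^2$, and subtracting yields $\var(f_{k,p}(x)) \in [\pi-(1+\epsilon)^2,\,\pi-(1-\epsilon)^2]\,S_k^2$.
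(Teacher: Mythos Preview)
Your proposal is correct and follows essentially the same route as the paper: reduce the first moment to $\E\norm{f_{k-1}}_2$, express each layer's contribution as a binomial-weighted average of chi means $\mu_m=\sqrt{2}\,\Gamma((m+1)/2)/\Gamma(m/2)$, sandwich these means, invoke Lemma~\ref{lem:AB}, and propagate the per-layer slack multiplicatively; parts~(2) and~(3) are identical to the paper's arguments. The only cosmetic differences are that you factor $\norm{f_{k-1}}_2$ directly as an independent product (the paper unrolls the same recursion one layer at a time) and that you bound $\mu_m$ via the elementary identity $\mu_m\mu_{m+1}=m$ plus monotonicity, whereas the paper cites Gautschi's inequality to obtain the slightly looser $\sqrt{m-1}\le\mu_m\le\sqrt{m+1}$.
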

\begin{proof}
    \begin{enumerate}
	\item 
    Let $\mathcal{F}_{l}=\Set{W_1,\ldots,W_l}.$
    Below we omit the argument $x$ as it is clear from the context.
    \begin{align*}
	\E[f_{k,p}]
	=\E[\Econd{\sigma(\inner{f_{k-1},(W_k)_{:p}})}{\mathcal{F}_{k-1}}]
	=\frac{\beta_k}{\sqrt{2\pi}} \E\norm{f_{k-1}}_2.
    \end{align*}
    For convenience, let $\int_{\RR}^{\otimes m}$ denote the $m$-times iterated integral $\int_{\RR}\ldots\int_{\RR}.$
    Let $v_j=\inner{f_{k-2},(W_{k-1})_{:j}}.$ 
    Let $\mathcal{F}_{k-2}=\Set{W_1,\ldots,W_{k-2}}.$
    Conditioned on $\mathcal{F}_{k-2}$, the variables $v_j$'s are independent Gaussian random variables: $v_j\distas{}\mathcal{N}(0,\beta_{k-1}^2\norm{f_{k-2}}_2^2).$
    We have
    \begin{align*}
	\E\norm{f_{k-1}}_2
	&=\bigE{\Econd{\sqrt{\sum_{j=1}^{n_{k-1}}\sigma(v_j)^2 }}{\mathcal{F}_{k-2}}}\\
	&=\bigE{\int_{\RR_{+}\cup\RR_{-}}^{\otimes n_{k-1}} \sqrt{\sum_{j=1}^{n_{k-1}}\sigma(v_j)^2 }\ dP(v_1\vert \mathcal{F}_{k-2}) \ldots dP(v_{n_{k-1}}\vert \mathcal{F}_{k-2}) } \\
	&=\bigE{\sum_{i=1}^{n_{k-1}} {n_{k-1}\choose i} \int_{\RR_{+}}^{\otimes i} \int_{\RR_{-}}^{\otimes (n_{k-1}-i)} \sqrt{\sum_{j=1}^{i} v_j^2 }\ dP(v_1\vert \mathcal{F}_{k-2}) \ldots dP(v_{n_{k-1}}\vert \mathcal{F}_{k-2}) } \\
	&=\bigE{\sum_{i=1}^{n_{k-1}} {n_{k-1}\choose i} 2^{-(n_{k-1}-i)} \int_{\RR_{+}}^{\otimes i} \sqrt{\sum_{j=1}^{i} v_j^2 }\ dP(v_1\vert \mathcal{F}_{k-2}) \ldots dP(v_i\vert \mathcal{F}_{k-2}) } \\
	&=\bigE{\sum_{i=1}^{n_{k-1}} {n_{k-1}\choose i} 2^{-(n_{k-1}-i)}\ 2^{-i} \int_{\RR}^{\otimes i} \sqrt{\sum_{j=1}^{i} v_j^2 }\ dP(v_1\vert \mathcal{F}_{k-2}) \ldots dP(v_i\vert \mathcal{F}_{k-2}) } \\
	&=\sum_{i=1}^{n_{k-1}} {n_{k-1}\choose i} 2^{-n_{k-1}} \E\left[\Econd{\sqrt{\sum_{j=1}^{i}v_j^2}}{\mathcal{F}_{k-2}}\right] \\
	&=\left[\beta_{k-1}\sum_{i=1}^{n_{k-1}} {n_{k-1}\choose i} 2^{-n_{k-1}} \frac{\sqrt{2}\GF{\frac{i+1}{2}}}{\GF{\frac{i}{2}}}\right] \E\norm{f_{k-2}}_2.
    \end{align*}
    Iterating this equality gives
    \begin{align*}
	\E[f_{k,p}]
	&=\frac{\norm{x}_2}{\sqrt{2\pi}} 
	\beta_k \prod_{l=1}^{k-1} \left[\beta_{l} \sum_{i=1}^{n_{l}} {n_{l}\choose i} 2^{-n_{l}} \frac{\sqrt{2}\GF{\frac{i+1}{2}}}{\GF{\frac{i}{2}}}\right].
    \end{align*}
    By Gautschi's inequality, we have
    \begin{align*}
	\sqrt{i-1} \leq\frac{\sqrt{2}\GF{\frac{i+1}{2}}}{\GF{\frac{i}{2}}} \leq\sqrt{i+1} .
    \end{align*}
    This combined with Lemma \ref{lem:AB} yields
    \begin{align*}
	\frac{\norm{x}_2}{\sqrt{2\pi}} \beta_k \prod_{l=1}^{k-1} \left[ \left(1-\frac{\epsilon}{k-1}\right)\ \beta_{l} \sqrt{\frac{n_l}{2}}\ \right]
	\leq \E[f_{k,p}] \leq 
	\frac{\norm{x}_2}{\sqrt{2\pi}} \beta_k \prod_{l=1}^{k-1} \left[ \left(1+\frac{\log(1+\epsilon)}{k-1}\right)\ \beta_{l} \sqrt{\frac{n_l}{2}}\ \right].
    \end{align*}
    where we used twice our assumption in the corollary.
    Using the facts that $1+x\leq e^x$ and $(1-\epsilon)^k\geq 1-k\epsilon$ for $\epsilon\in(0,1)$,
    the final result follows.
    
    \item 
    We have $f_{k,p}(x)=\sigma(\inner{(W_k)_{:p}, f_{k-1}(x)}).$
    Note that the distribution of the inner product is symmetric around $0$, 
    and thus taking the expectation over $W_L$ yields
    \begin{align}
	\E_{\Set{W_k}}(f_{k,p}(x)^2)
	=\frac{\beta_k^2}{2} \norm{f_{k-1}(x)}_2^2
	=\frac{\beta_k^2}{2}\sum_{j=1}^{n_{k-1}} f_{k-1,j}(x)^2
	=\frac{\beta_k^2}{2}\sum_{j=1}^{n_{k-1}} \sigma(\inner{(W_{k-1})_{:j}, f_{k-2}(x)})^2.
    \end{align}
    Taking the expectation with respect to $W_{L-1}$, and using the symmetry again, we obtain
    \begin{align}
	\E_{\Set{W_k,W_{k-1}}}(f_{k,p}(x)^2)=\frac{\beta_k^2}{2}\frac{n_{k-1}\beta_{k-1}^2}{2} \norm{f_{k-2}(x)}_2^2.
    \end{align}
    Iterating this equality leads to the result.
    
    \item This follows from the first two statements.
    \end{enumerate}
\end{proof}

The next lemma characterizes the second moments of the neurons and backward derivatives.
\begin{theorem}[Backward Pass]\label{thm:backward}
    Fix any $k\in[L],p\in[n_k],x\in\RR^{n_0}.$
    Then, we have:
    \begin{enumerate}
	\item First moment: $\E(\delta_{k,p}(x))=0.$
	\item Second moment and Variance:
	\begin{align}\label{eq:Edelta}
	    \var(\delta_{k,p}(x))=\E(\delta_{k,p}(x)^2)=\frac{1}{2} \left(\prod_{l=k+1}^{L-1}\frac{n_l}{2}\right) \left(\prod_{l=k+1}^{L}\beta_l^2\right) .
	\end{align}
    \end{enumerate}
\end{theorem}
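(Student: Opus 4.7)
I would begin with a chain-rule derivation of a matrix representation for the vector $\delta_k(x) \in \RR^{n_k}$ whose $p$-th entry is $\delta_{k,p}(x)$. Writing $D_l = \diag(\sigma'(g_l(x))) \in \RR^{n_l \times n_l}$, the chain rule applied to \eqref{eq:def_feature_map} yields $\delta_{L-1}(x) = D_{L-1} W_L$ and $\delta_k(x) = D_k W_{k+1} \delta_{k+1}(x)$ for $k \leq L-2$, which unrolls to
\begin{equation*}
    \delta_k(x) = D_k W_{k+1} D_{k+1} W_{k+2} \cdots D_{L-1} W_L.
\end{equation*}

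\textbf{First moment.} Conditional on $\mathcal{F}_{L-1}=\{W_1,\ldots,W_{L-1}\}$, every factor above is determined except $W_L$. Since $\delta_k(x)$ is linear in $W_L$ and $W_L$ has i.i.d.\ mean-zero entries independent of $\mathcal{F}_{L-1}$, we obtain $\E[\delta_{k,p}(x)\mid\mathcal{F}_{L-1}] = 0$, and thus $\E[\delta_{k,p}(x)] = 0$.

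\textbf{Second moment.} The plan is to compute $\E[\delta_k(x)\delta_k(x)^T]$ by integrating out the weights one layer at a time, from $W_L$ inward. Two ingredients drive the recursion: the elementary identity $\E[W_L W_L^T] = \beta_L^2 I_{n_{L-1}}$, and the key identity
\begin{equation}\label{eq:WDW}
    \E\!\left[W_l D_l W_l^T \mid \mathcal{F}_{l-1}\right] = \frac{n_l \beta_l^2}{2}\, I_{n_{l-1}} \qquad (2 \leq l \leq L-1),
\end{equation}
valid on the event $\{f_{l-1}(x) \neq 0\}$ and trivial on its complement since $D_l = 0$ there. To prove \eqref{eq:WDW} I would apply column by column the following auxiliary lemma: \emph{if $u \sim \mathcal{N}(0,\beta^2 I_n)$ and $c \in \RR^n$ is independent of $u$ with $c \neq 0$, then $\E[uu^T \sigma'(\inner{u,c})] = \tfrac{\beta^2}{2} I_n$.} The lemma itself follows from the sign-flip symmetry $u \mapsto -u$: under this substitution, the law of $u$ is preserved, $uu^T$ is invariant, and $\sigma'(\inner{u,c}) \mapsto 1 - \sigma'(\inner{u,c})$ almost surely (since $\inner{u,c}$ is a non-degenerate Gaussian whenever $c \neq 0$), which gives $2\E[uu^T \sigma'(\inner{u,c})] = \E[uu^T] = \beta^2 I_n$.

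\textbf{Completion.} Iterating \eqref{eq:WDW} successively for $l = L-1, L-2, \ldots, k+1$, and using $D_l^2 = D_l$ at each step to collapse the intermediate products, one arrives at
\begin{equation*}
    \E\!\left[\delta_k(x)\delta_k(x)^T \mid \mathcal{F}_k\right] = \Big(\prod_{l=k+1}^L \beta_l^2\Big) \Big(\prod_{l=k+1}^{L-1} \tfrac{n_l}{2}\Big)\, D_k.
\end{equation*}
Taking the outer expectation and using $\E[(D_k)_{pp}] = \E[\sigma'(g_{k,p}(x))] = \tfrac{1}{2}$ — which holds by the symmetry of the conditional distribution of $g_{k,p}(x)$ about $0$ given $f_{k-1}(x)$ — produces formula \eqref{eq:Edelta}. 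The main obstacle is the auxiliary lemma: because $u$ and $\inner{u,c}$ are dependent (jointly Gaussian with nontrivial covariance as soon as $c \neq 0$), one cannot separate them by independence, and the sign-flip trick is precisely what bypasses this dependence.
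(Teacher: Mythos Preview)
Your proposal is correct and follows essentially the same route as the paper: both express $\delta_{k,p}(x)$ via the chain rule as a product involving $W_L$ (giving the zero first moment immediately), and both compute the second moment by peeling off layers from $L$ inward using the sign-flip symmetry $w\mapsto -w$ together with $\sigma'(-t)=1-\sigma'(t)$. The only difference is packaging: the paper tracks the scalar $\norm{v_r}_2^2$ and uses the scalar identity $\E_w\big[\inner{v_{r-1},w}^2\,\sigma'(\inner{w,f_{r-1}})\big]=\tfrac{\beta_r^2}{2}\norm{v_{r-1}}_2^2$, whereas you track the full matrix $\E[\delta_k\delta_k^T]$ and use the matrix identity $\E[W_lD_lW_l^T\mid\mathcal{F}_{l-1}]=\tfrac{n_l\beta_l^2}{2}I$. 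Your version yields the slightly stronger intermediate statement $\E[\delta_k\delta_k^T\mid\mathcal{F}_k]=C\,D_k$, but the underlying mechanism is identical.
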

\begin{proof}
    Let $v_r$ be the vector defined by 
    \begin{align}
	v_{r}^T=\sigma'(g_{k,p}(x))\ (W_{k+1})_{p:}^T \Sigma_{k+1}(x) \left(\prod_{l=k+2}^{r} W_{l} \Sigma_{l}(x) \right).
    \end{align}
    where $\Sigma_l(x)=\diag([\sigma'(g_{l,j}(x))]_{j=1}^{n_l}).$ 
    By the chain rules, we have
    \begin{align}\label{eq:delta}
	\delta_{k,p}(x)=v_{L-1}^T W_L.
    \end{align}
    \begin{enumerate}
	\item This follows directly from \eqref{eq:delta}.
	\item 
    From \eqref{eq:delta}, we have
    \begin{align}
	\E_{\Set{W_L}} (\delta_{k,p}(x)^2)
	=\beta_L^2\norm{v_{L-1}}_2^2 .
    \end{align}
    By definition, we have $v_{L-1}^T=v_{L-2}^T W_{L-1}\Sigma_{L-1}(x)$, and thus it holds
    \begin{align*}
	\norm{v_{L-1}}_2^2 
	&=\sum_{j=1}^{n_{L-1}} \inner{v_{L-2}, (W_{L-1})_{:j}}^2 \sigma'(\inner{(W_{L-1})_{:j}, f_{L-2}(x)}) .
    \end{align*}
    Let $w$ be a copy of the random vector $(W_{L-1})_{:1}.$ 
    Conditioned on $f_{L-2}(x)$, the RHS of the previous expression is a sum of i.i.d.\ terms, so we have
    \begin{align*}
	\E_{\Set{W_{L-1}}} \norm{v_{L-1}}_2^2
	= n_{L-1} \E_{\Set{w}} \inner{v_{L-2}, w}^2 \sigma'(\inner{w, f_{L-2}(x)})
	=\frac{n_{L-1}\beta_{L-1}^2}{2}\norm{v_{L-2}}_2^2 ,
    \end{align*}
    where the last equality follows from the fact that $w$ and $-w$ have the same distribution, and we used the identity $\sigma'(-x)=1-\sigma'(x).$
    Iterating this argument leads to the result.
    \end{enumerate}
\end{proof}

\bibliography{regul}
\bibliographystyle{plain}
\end{document}